\newcolumntype{L}[1]{>{\raggedright\let\newline\\\arraybackslash\hspace{0pt}}m{#1}}
\newcolumntype{C}[1]{>{\centering\let\newline\\\arraybackslash\hspace{0pt}}m{#1}}
\newcolumntype{R}[1]{>{\raggedleft\let\newline\\\arraybackslash\hspace{0pt}}m{#1}}
\DeclareRobustCommand\bmvaOneDot{\futurelet\@let@token\bmv@onedotaux}
\def\bmv@onedotaux{\ifx\@let@token.\else.\null\fi\xspace}
\def\eg{\emph{e.g}\bmvaOneDot}
\def\ie{\emph{i.e}\bmvaOneDot}
\def\wrt{\emph{w.r.t}\bmvaOneDot}
\def\vs{\emph{vs}\bmvaOneDot}
\renewcommand\vec[1]{\ensuremath\boldsymbol{#1}}
\renewcommand\cdots{...}
\newcommand{\tY}{\vec{\mathcal{Y}}}
\newcommand{\tX}{\vec{\mathcal{X}}}
\newcommand{\mX}{\mathbf{X}}
\newcommand{\mbr}[1]{\mathbb{R}^{#1}}
\newcommand{\vv}{\mathbf{v}}
\newcommand{\tE}{\vec{\mathcal{\lambda}}}
\newcommand{\tEH}{\vec{\mathcal{\hat{\lambda}}}}
\newcommand{\idx}[1]{\mathcal{I}_{#1}}
\newcommand{\semipd}[1]{\mathcal{S}_{+}^{#1}}
\newcommand{\vu}{\mathbf{u}}
\newcommand{\vphi}{\boldsymbol{\phi}}
\DeclareMathOperator*{\rank}{Rank}
\DeclareMathOperator*{\sym}{Sym}
\DeclareMathOperator*{\kronstack}{\uparrow\!\otimes}
\DeclareMathOperator*{\avg}{avg}
\DeclareMathOperator*{\sgn}{Sgn}
\DeclareMathOperator*{\hosvd}{HOSVD}
\newcommand{\expl}[1]{\text{e}^{#1}}
\newtheorem{theorem}{Theorem}
\newtheorem{proposition}{Proposition}
\newtheorem{remark}{Remark}
\newcommand{\mLambda}{\bm{\lambda}}
\newcommand{\mU}{\bm{U}}
\newcommand{\mV}{\bm{V}}
\newcommand{\vw}{\boldsymbol{w}}
\def\eg{\emph{e.g.}}
\newcommand{\tG}{\boldsymbol{\mathcal{G}}}
\newcommand{\mIdent}{\boldsymbol{\mathds{I}}}
\newcommand{\cE}{{\mathcal{\lambda}}}
\newcommand{\cEH}{{\mathcal{\hat{\lambda}}}}
\newcommand{\mPhi}{\boldsymbol{\Phi}}
\newcommand{\mM}{\boldsymbol{M}}
\newcommand{\mW}{\boldsymbol{W}}
\newcommand{\stkout}[1]{{\ifmmode\text{\sout{\ensuremath{#1}}}\else\sout{#1}\fi}}
\def\ps@myheadings{%
    \let\@oddfoot\@empty\let\@evenfoot\@empty
    \def\@evenhead{\thepage\hfil\slshape\leftmark}%
    \def\@oddhead{{\slshape\rightmark}\hfil\thepage}%
    \let\@mkboth\@gobbletwo
    \let\sectionmark\@gobble
    \let\subsectionmark\@gobble
    }
  \renewcommand\maketitle{\begin{titlepage}%
  \let\footnotesize\small
  \let\footnoterule\relax
  \let \footnote \thanks
  \null\vfil
  \vskip 60\p@
  \begin{center}%
    {\LARGE \@title \par}%
    \vskip 3em%
    {\large
     \lineskip .75em%
      \begin{tabular}[t]{c}%
        \@author
      \end{tabular}\par}%
      \vskip 1.5em%
    {\large \@date \par}%       % Set date in \large size.
  \end{center}\par
  \@thanks
  \vfil\null
  \end{titlepage}%
  \setcounter{footnote}{0}%
}
\renewcommand\maketitle{\par
  \begingroup
    \renewcommand\thefootnote{\@fnsymbol\c@footnote}%
    \def\@makefnmark{\rlap{\@textsuperscript{\normalfont\color{black}\@thefnmark}}}%
    \long\def\@makefntext##1{\parindent 1em\noindent
            \hb@xt@1.8em{%
                \hss\@textsuperscript{\normalfont\@thefnmark}}##1}%
    \if@twocolumn
      \ifnum \col@number=\@ne
        \@maketitle
      \else
        \twocolumn[\@maketitle]%
      \fi
    \else
      \newpage
      \global\@topnum\z@   % Prevents figures from going at top of page.
      \@maketitle
    \fi
    \thispagestyle{plain}\@thanks
  \endgroup
  \setcounter{footnote}{0}%
}
\newcommand\fs@nobottomruled{\def\@fs@cfont{\bfseries}\let\@fs@capt\floatc@ruled
  \def\@fs@pre{}% \hrule height.8pt depth0pt \kern2pt
  \def\@fs@post{}% Formerly \def\@fs@post{\kern2pt\hrule\relax}%
  \def\@fs@mid{\kern2pt\hrule\kern2pt}%
  \let\@fs@iftopcapt\iftrue}
\crefname{section}{Sec.}{Secs.}
\Crefname{section}{Section}{Sections}
\Crefname{table}{Table}{Tables}
\crefname{table}{Tab.}{Tabs.}
\newcommand{\comment}[1]{}
\title{High-order Tensor Pooling with Attention for Action Recognition}
\name{Lei Wang\thanks{* This paper has been accepted for IEEE ICASSP 2024.}\textsuperscript{$*, \dagger,\S$} \qquad Ke Sun\textsuperscript{$\S,\dagger$} \qquad Piotr Koniusz\textsuperscript{$\S,\dagger$}
}
\address{$^{\dagger}$Australian National University, $^\S$Data61/CSIRO}
\begin{document}

\maketitle
% \thispagestyle{empty}

%%%%%%%%% ABSTRACT
\begin{abstract}
   We aim at capturing high-order statistics of feature vectors formed by a neural network, and propose end-to-end second- and higher-order pooling to form a tensor descriptor. Tensor descriptors require a robust similarity measure due to low numbers of aggregated vectors and the burstiness phenomenon, when a given feature appears more/less frequently than statistically expected. 
   The Heat Diffusion Process (HDP) on a graph Laplacian is closely related to the Eigenvalue Power Normalization (EPN) of the covariance/auto-correlation matrix, whose inverse forms a loopy graph Laplacian. 
   We show that the HDP and the EPN play the same role, i.e., to boost or dampen the magnitude of the eigenspectrum thus preventing the burstiness. We equip higher-order tensors with EPN which acts as a spectral detector of higher-order occurrences to prevent burstiness. We also prove that for a tensor of order r built from d dimensional feature descriptors, such a detector gives the likelihood if at least one higher-order occurrence is `projected' into one of binom(d,r) subspaces represented by the tensor; thus forming a tensor power normalization metric endowed with binom(d,r) such `detectors'. For experimental contributions, we apply several second- and higher-order pooling variants to action recognition, provide previously not presented comparisons of such pooling variants, and show state-of-the-art results on HMDB-51, YUP++ and MPII Cooking Activities. 
\end{abstract}

\begin{keywords}
high-order statistics, tensor descriptor, action recognition
\end{keywords}

%%%%%%%%% BODY TEXT
\vspace{-0.5cm}
\section{Introduction}\label{sec::intro}

\ifdefined\arxiv
\newcommand{\PowH}{3.0cm}
\newcommand{\PowHB}{2.875cm}
\newcommand{\PowW}{3.65cm}
\else
\newcommand{\PowH}{2.8cm}
\newcommand{\PowHB}{3cm}
\newcommand{\PowW}{4.1cm}
\fi

Tensor descriptors require an appropriate aggregation / pooling mechanism to obtain robust estimates of covariance, auto-correlation or higher-order statistics and tackle the problem of \emph{burstiness}, a phenomenon that certain features may occur rarely or frequently in instances to classify but only a mere detection of presence/absence of a feature is relevant to the classification task. For instance, a CNN filter responding by a feature activation to a stimulus such as a tree leaf may respond once, few or many times across different spatial regions depending on whether a part or an entire tree is present in an image. However, reliably detecting a leaf (or few leaves), not the quantity per se, is a robust predictor of a tree \cite{me_tensor}. Furthermore, presenting a classifier with varying counts of a feature makes it simply harder for the classifier to generalize to unseen instances, \eg, if training images contained a small part of a tree (few leaves shown), it is likely that test images containing entire trees (thousands of leaves) will be misclassified as the decision boundary of a classifier is sensitive to the observed feature counts. For this reason, higher-order representations undergo a non-linearity such as Power Normalization (PN) \cite{me_tensor, kon_tpami2020a} which role is to reduce/boost contributions from frequent/infrequent visual stimuli in an image, respectively. Similar mechanisms apply to the temporal domain/action classification \cite{kon_tpami2020b}. PN methods can be split into element- and spectrum-wise, the latter referred to as Eigenvalue Power Normalization (EPN) \cite{me_tensor}.

So-called MaxExp and Gamma are two superior EPN operators with state-of-the-art performance \cite{me_tensor, kon_tpami2020a}. Thus, we perform a theoretical analysis of such EPN operators along with an application to action recognition. 
% Figure \ref{fig:epn-dist} explains the principle of EPN pooling. % is illustrated in Figure \ref{fig:epn-dist}.

%\ks{add here a toy example with Figures to reflect the main theoritical contribution}

Our contributions are summarized as follows:
%\renewcommand{\labelenumi}{\roman{enumi}.}
%\vspace{-1em}\hspace{-2cm}
\begin{enumerate}[leftmargin=0.6cm]
% \vspace{-0.1cm}
% \item We show that for an SPD matrix, MaxExp and Gamma are in fact closely related to the time-reversed ($t\!<\!1$) Heat Diffusion Process (HDP) on a so-called Loopy Graph Laplacian representing a Gaussian Markov Random Field (GMRF) \cite{GMRF-graph}, thus inheriting the properties of HDP \cite{smola_graph} defined as $\expm(-t\mL)$, $t\!>\!0$.
%
\vspace{-0.2cm}
\item As MaxExp, Gamma and HDP can be thought of as pushforward functions acting on the discrete distribution representing an eigenspectrum, both MaxExp and Gamma are upper bounds of HDP, and MaxExp and Gamma can indeed be formulated as a modified system of Ordinary Differential Equations (ODE) defining the Heat Diffusion Equation~\cite{kon_tpami2020a}, and Gamma is  equal to HDP on the Log-Euclidean maps \cite{arsigny2006log}. We note that MaxExp, Gamma and HDP are approximations of the Grassmann metric~\cite{mehrtash_dict_manifold} which helps us design our spectral detector detailed next.

%\item For an SPD matrix, MaxExp is a close approximation of the Grassmann metric~\cite{mehrtash_dict_manifold} \eg, $||\mU_{:,1:q}\mU_{:,1:q}^T-\mU_{:,1:q}^*\mU_{:,1:q}^{*T}||_F$ , where $\mU$ and $\mU^*$ are eigenvectors of $\mX$ and $\mY$ sorted in the descending order according to their corresponding eigenvalues while $\mU_{:,1:q}$ is a subset of vectors corresp. to the $q$ larges eigenvalues, and $q\!<\!\min(\rank\mX, \rank\mY)$. As MaxExp approximates HDP and Grassmann, it means Grassmann is also approximately similar to HDP.
\vspace{-0.2cm}
\item EPN has been long speculated \cite{me_tensor} to perform spectral detection of higher-order occurrences. We prove that a tensor of order $r$, built from $d$ dimensional feature vectors, coupled with MaxExp indeed detects and yields the likelihood if at least one higher-order occurrence is `projected' into one of $\binom{d}{r}$  subspaces of dimension $r$ represented by the tensor; thus forming a Tensor Power Normalization metric endowed with $\binom{d}{r}$ such `detectors'.
\vspace{-0.2cm}
\item %in Appendix \ref{sec:backprop} we show 
Additionally, 
% in our Supp. Material, 
we show how to backpropagate through the Higher Order Singular Value Decomposition (HOSVD) defined in \cite{lathauwer_hosvd,kolda_tensorrew}, which is essential in higher-order descriptors with EPN.
\end{enumerate}

As EPN prevents burstiness,  it replaces counting correlated features with detecting them, thus being invariant to their spatial/temporal extent.
% (invariance properties are of high interest to this workshop). %this work establishes a first connection between the EPN and the Heat Diffusion Process on graphs and subspace representations on higher-order tensors.

% \section{Eigenvalue Power Normalization}

%
\ifdefined\arxiv
\renewcommand{\PowH}{3.0cm}
\renewcommand{\PowHB}{2.875cm}
\renewcommand{\PowW}{3.65cm}
\else
\renewcommand{\PowH}{2.8cm}
\renewcommand{\PowHB}{3cm}
\renewcommand{\PowW}{4.1cm}
\fi

\section{Main Results}\label{sec:mainres}

\begin{figure*}[t]%htbp % left bottom right top
\centering
% \hspace{-0.3cm}
\includegraphics[trim=0 0 0 0, clip=true, width=0.9\linewidth]{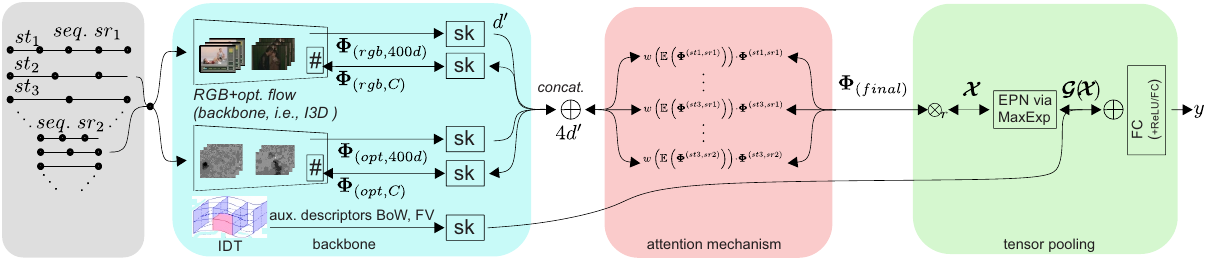}
\vspace{-0.3cm}
\caption{Our action recognition pipeline with the attention mechanism. See Section \ref{sec:pipe} for details.}
\vspace{-0.3cm}
\label{fig:pipe}
\end{figure*}

\subsection{HoT with EPN}\label{sec:epn-hot}

The EPN~\cite{me_tensor} performs a spectrum transformation on a given higher-order descriptor
$\tX\in\mbr{d_1\times d_2\cdots\times d_r}$ as follows
\vspace{-0.2cm}
\begin{align}
&{\left(\tE; \vec{U}_1,\cdots,\vec{U}_r\right)}=\hosvd(\tX),\label{eq:rawcod3}\\
&\tEH=g\left(\tE\right)\!,\label{eq:rawcod4}\\
&\tG(\tX)=((\tEH\times_{1}\!\vec{U}_1)\,\cdots)\times_{r}\!\vec{U}_r,\label{eq:rawcod5}%\\
%
%&\tG(\tX)=\sgn(\tVT)\,|\!\tVT|^{\gamma^{*}}\!\!\!\!.\label{eq:rawcod6}%\\
%
\end{align}
%\ks{remark the shape of $\lambda$ and $U_i$: shape defined below}
%
% \vspace{-0.3cm}

Below, we show that EPN in fact retrieves factors which quantify whether there is at least one datapoint $\vphi_n$, $n\!\in\!\idx{N}$, projected into each subspace spanned by $r$-tuples of eigenvectors from matrices $\vec{U}_1\!=\!\vec{U}_2 \cdots\!=\!\vec{U}_r$. For brevity, assume order $r\!=\!3$, a super-symmetric tensor,
and any $3$-tuple of eigenvectors $\vu$, $\vv$, and $\vw$ from $\vec{U}$. Note that $\vu\!\perp\!\vv, \vv\!\perp\!\vw$ and $\vu\!\perp\!\vw$ due to orthogonality of eigenvectors for super-symmetric tensors, \eg, $\mU\mLambda^\ddag\mV\!=[\tX_{:,:,1},\cdots,\tX_{:,:,d}]\in\mbr{d\times d^2}$ where $\mLambda^\ddag$ are eigenvalues of the unfolded tensor $\tX$ (note that they are not the core tensor of $\tX$ obtained by the HOSVD~\cite{lathauwer_hosvd,kolda_tensorrew}, which we denote as $\tE$). Moreover, note that if we have $d$ unique eigenvectors, we can enumerate $\binom{d}{r}$ $r$-tuples and thus $\binom{d}{r}$ subspaces $\mbr{d\times r}\!\subset\!\mbr{d\times d}$. For simplicity, we assume we have $d$ such vectors, that is $d\!=\!\rank(\tX_{:,:,i}), \forall i\!\in\idx{d}$, otherwise if $d'\!<\!d$, then we would have $\binom{d'}{r}$ subspaces instead.
 For brevity, let $||\vphi||_2\!=\!1$ and $\vphi\!\geq\!0$. Also, we write $\vphi_n$ instead of $\vphi$ for $n\!\in\!\idx{N}$ when index $n$ matters. Firstly, let us remove weights and zero-mean centering of our super-symmetric tensor descriptor:% from Eq. \eqref{eq:hok1}:
\vspace{-0.4cm}
\begin{align}
& \tX\!=\!\frac{1}{N}\!\sum\limits_{n\in \idx{N}}{\kronstack}_r\,\vphi_n,
\label{eq:hod}
\end{align}
% \vspace{-0.3cm}
%
and write the `diagonalization' of $\tX$ by eigenvectors $\vu$, $\vv$, and $\vw$, which produces the core tensor with factors (spectrum of $\tX$):
\vspace{-0.3cm}
\begin{align}
& \cE_{\vu,\vv,\vw}=\tX\times_{1}\!\vu\,\times_{2}\!\vv\times_{3}\!\vw,
\label{eq:hoddiag}
\end{align}
% \vspace{-0.3cm}
%
where $\cE_{\vu,\vv,\vw}$ is a coefficient from the core tensor $\tE$ corresponding to eigenvectors $\vu$, $\vv$, and $\vw$. 
Now, let us combine Eq. \ref{eq:hod} and \ref{eq:hoddiag} which yields:
\vspace{-0.2cm}
\begin{align}
& \cE_{\vu,\vv,\vw}=\frac{1}{N}\!\sum\limits_{n\in \idx{N}}{\kronstack}_3\,\vphi_n\times_{1}\!\vu\!\times_{2}\!\vv\!\times_{3}\!\vw\nonumber\\
& \qquad\quad
=\frac{1}{N}\!\sum\limits_{n\in \idx{N}}\left<\vphi_n, \vu\right>\left<\vphi_n, \vv\right>\left<\vphi_n, \vw\right>.
\label{eq:hoddiag2}
\end{align}

\begin{theorem}\label{th:binom1}
Let $\vphi_n$ be `optimally' projected into subspace spanned by $\vu$, $\vv$ and $\vw$ when $\psi'\!_n\!=\!\left<\vphi_n, \vu\right>\left<\vphi_n, \vv\right>\left<\vphi_n, \vw\right>$ is maximized. As our $\vu$, $\vv$, and $\vw$ are orthogonal \wrt each other and $||\vphi_n||_2\!=\!1$, simple Lagrange Eq.  $\mathcal{L}\!=\!\Pi_{i\!=\!1}^r\vec{e}_i^T\!\vphi_n\!+\!\lambda(||\vphi_n||^2_2\!-\!1)$  yields maximum of $\kappa\!=\!(1/\sqrt{r})^r$ at $\vphi_n\!=\![(1/\sqrt{r}),\cdots,(1/\sqrt{r})]^T$. For each $n\!\in\!\idx{N}$, we store $\psi_n\!=\!\psi'\!_n/\kappa$ in a so-called event vector $\vec{\psi}$.

Assume that $\vec{\psi}\!\in\!\{0,1\}^{N}$ stores $N$ outcomes of drawing from Bernoulli distribution under the i.i.d. assumption for which the probability $p$ of an event $(\psi_n\!=\!1)$ and $1\!-\!p$ for $(\psi_n\!=\!0)$ are estimated by an expected value, $p\!=\!\avg_n\psi_n\!=\!\cE_{\vu,\vv,\vw}/\kappa$ ($0\!\leq\!\vec{\psi}\!\leq\!1$ in reality, $\cE_{\vu,\vv,\vw}, \kappa$ are introduced below in Eq. \eqref{eq:hoddiag2}). Then the probability of at least one projection event $(\psi_n\!=\!1)$ into the subspace spanned by $r$-tuples in $N$ trials is:
\vspace{-0.4cm}
%\fontsize{8}{9}\selectfont
\begin{equation}
\cEH_{\vu,\vv,\vw}\!=\!1\!-\!(1\!-\!p)^{N}=1\!-\!\left(1\!-\!\frac{\cE_{\vu,\vv,\vw}}{\kappa}\right)^{N}.%.\!\!\!\!\approx\!\left(\frac{\cE_{\vu,\vv,\vw}}{\kappa}\right)^\gamma\!\!.
\label{eq:my_maxexp_tuples}
%\vspace{-0.1cm}
\end{equation}
\vspace{-0.4cm}
\end{theorem}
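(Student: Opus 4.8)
The plan is to split the argument into (i) a constrained optimization that identifies the normalizer $\kappa$, and (ii) a probabilistic bookkeeping step that recasts the normalized core-tensor coefficient $\cE_{\vu,\vv,\vw}$ as a MaxExp-type detector score.

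For (i), I would exploit that $\vu,\vv,\vw$ are orthonormal to rotate coordinates so that they become $\vec{e}_1,\dots,\vec{e}_r$; both the constraint $\|\vphi_n\|_2=1$ and the objective $\psi'_n=\prod_{i=1}^r\langle\vphi_n,\vec{e}_i\rangle$ are invariant under this rotation. Stationarity of $\mathcal{L}=\prod_{i=1}^r\langle\vphi_n,\vec{e}_i\rangle+\lambda(\|\vphi_n\|_2^2-1)$ forces the coordinates outside the first $r$ to vanish (unless $\lambda=0$, which gives the trivial value $0$), and for $k\le r$ it gives $\prod_{i\ne k}\langle\vphi_n,\vec{e}_i\rangle=-2\lambda\langle\vphi_n,\vec{e}_k\rangle$; multiplying by $\langle\vphi_n,\vec{e}_k\rangle$ shows $\langle\vphi_n,\vec{e}_k\rangle^2$ is independent of $k$, so with $\sum_{k\le r}\langle\vphi_n,\vec{e}_k\rangle^2=1$ each equals $1/r$ and $\psi'_n=(1/\sqrt r)^r=:\kappa$. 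To confirm this is the global maximum I would simply invoke AM--GM: since $\vec{e}_1,\dots,\vec{e}_r$ is orthonormal, $|\psi'_n|\le\big(\tfrac1r\sum_{i=1}^r\langle\vphi_n,\vec{e}_i\rangle^2\big)^{r/2}\le r^{-r/2}=\kappa$, with equality exactly at the Lagrange point; under the stated assumption $\vphi_n\ge0$ (together with the mild sign condition flagged in the theorem) $\psi'_n\ge0$, hence $\psi_n:=\psi'_n/\kappa\in[0,1]$ for every $n$.

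For (ii), I would read each $\psi_n$ as the Bernoulli success probability of the $n$-th ``projection-into-the-subspace'' trial; under the i.i.d.\ assumption the $N$ trials share one parameter $p$, whose maximum-likelihood estimate from the observed scores is the empirical mean $p=\avg_n\psi_n=\tfrac1\kappa\cdot\tfrac1N\sum_{n\in\idx{N}}\langle\vphi_n,\vu\rangle\langle\vphi_n,\vv\rangle\langle\vphi_n,\vw\rangle$, which by Eq.~\eqref{eq:hoddiag2} equals $\cE_{\vu,\vv,\vw}/\kappa$. The event ``at least one of the $N$ trials succeeds'' is the complement of ``all $N$ fail'', so its probability is $1-(1-p)^N=1-(1-\cE_{\vu,\vv,\vw}/\kappa)^N$, i.e.\ the MaxExp transform of $\cE_{\vu,\vv,\vw}$ with exponent $\eta=N$; collecting these values over the $\binom{d}{r}$ orthogonal $r$-tuples of eigenvectors gives the claimed tensor power normalization metric with $\binom{d}{r}$ detectors.

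The optimization in (i) is routine (AM--GM closes it in one line, and the Lagrange computation merely exhibits the extremizer). The only delicate point is the modeling leap in (ii): justifying that the continuous scores $\psi_n\in[0,1]$ may be treated as homogeneous i.i.d.\ Bernoulli parameters, which is precisely what licenses replacing $\prod_n(1-\psi_n)$ by $(1-p)^N$ with the common $p$ estimated by the average. I expect that to be the main conceptual obstacle; beyond it, everything reduces to bookkeeping around Eq.~\eqref{eq:hoddiag2}.
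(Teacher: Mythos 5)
Your proposal is correct and its probabilistic core — reading the normalized scores as i.i.d.\ Bernoulli trials with common parameter $p=\avg_n\psi_n$ and computing the probability of at least one success as the complement of all $N$ failures, $1-(1-p)^N$ — is exactly the paper's own (very short) coin-toss argument. The only difference is that you additionally verify the Lagrange/AM--GM claim that the maximum of $\psi'_n$ equals $\kappa=r^{-r/2}$, which the paper asserts in the theorem statement but does not prove in its appendix; that verification is sound and a welcome addition, and you correctly identify the homogeneous-Bernoulli modeling assumption as the one step that is a modeling choice rather than a deduction.
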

Each of $\binom{d}{r}$ subspaces spanned by $r$-tuples acts as a detector of projections into this subspace. Eq. \eqref{eq:my_maxexp_tuples} is the spectral MaxExp pooling with $\kappa$ normalization. We can also simulate a detector by defining $\lambda_{\vu,\vv}$ from Eq. \eqref{eq:hoddiag2} as the cosine distance explicitly, that is $\lambda_{\vu,\vv}\!=\!\kappa\text{cos}(\theta)\text{sin}(\theta),\,\kappa\!=\!2$. Note the detector-like responses of MaxExp, Gamma and HDP.

\begin{remark}
We note that the event vector $\vec{\psi}$ may contain negative values, so extending Eq. \eqref{eq:my_maxexp_tuples} to $\sgn(\cE_{\vu,\vv,\vw})\left(1\!-\!(1\!-\!\frac{|\cE_{\vu,\vv,\vw}|}{\kappa})^{\eta}\right)$, where $\eta\!\approx\!N$, divides each of $\binom{d}{r}$ subspaces into `positive' and `negative' parts, $\eta$ compensates for non-binary values in event vectors $\vec{\psi}\!\!$. However, the above extension has a non-smooth derivative. Thus, we use in practice the smooth SigmE operator  defined in our introduction as $2/\left(1\!+\!\expl{-\eta'\frac{|\cE_{\vu,\vv,\vw}|}{\kappa}}\right)\!-\!1\!\approx\!\sgn(\cE_{\vu,\vv,\vw})\left(1\!-\!(1\!-\!\frac{|\cE_{\vu,\vv,\vw}|}{\kappa})^{\eta}\right)$ (assume some correct choice of $\eta'$ given $\eta$). As third-order tensors are indefinite, SigmE is a good choice as it works with Krein spaces by design, is smooth and almost indistinguishable from MaxExp~\cite{kon_tpami2020a}.
\end{remark}

Finally, consider the dot-product between EPN-norm. tensors $\tG(\tX)$ and $\tG(\tY)$ computed by Eq. (\ref{eq:rawcod3}-\ref{eq:rawcod5}):
\vspace{-0.2cm}
% \vspace{-1cm}
%\fontsize{8}{9}\selectfont
\begin{align}
& \!\!\!\!\left<\tG(\tX),\tG(\tY)\right>\nonumber\\
%&\!=\!\left<\!\sum\limits_{\substack{\vu\in\mU(\tX)\\\vv\in\mV(\tX)\\\vw\in\mW(\tX)}}\!\!\!\!\!\!\!\cEH_{\vu,\vv,\vw}\vu\vv^T\!\kronstack\!\vw,
%\!\!\sum\limits_{\substack{\vu'\!\in\mU(\tY)\\\vv'\!\in\mV(\tY)\\\vw'\!\in\mW(\tY)}}\!\!\!\!\!\!\!\cEH'_{\vu'\!,\vv'\!,\vw'}\vu'\!\vv'^T\!\kronstack\!\vw'\!
% \right>\nonumber\\
&\!\!=\!\!\!\sum\limits_{\substack{\vu\in\mU(\tX)\\\vv\in\mV(\tX)\\\vw\in\mW(\tX)}}\sum\limits_{\substack{\vu'\!\in\mU(\tY)\\\vv'\!\in\mV(\tY)\\\vw'\!\in\mW(\tY)}}\!\!\!\!\!\!\!\cEH_{\vu,\vv,\vw}\cEH'_{\vu'\!,\vv'\!,\vw'}\,\left<\vu,\vu'\right>\left<\vv,\vv'\right>\left<\vw,\vw'\right>.\nonumber\\[-16pt]
&\label{eq:my_tens_subsp}
\end{align}
%\vspace{-0.4cm}
%
%Eq. \eqref{eq:my_tens_subsp} shows that 
Hence, all subspaces of $\tX$ and $\tY$ spanned by $r$-tuples ($3$-tuples in this example) are compared against each other for alignment by the cosine distance. When two subspaces $[\vu,\,\vv,\,\vw]^T$ and $[\vu',\,\vv',\,\vw']^T$ are aligned well, for a strong similarity between these subspaces to be yielded, a detection of at least one $\vphi_n$ and $\vphi'_n$ in these subspaces evidenced by $\cEH_{\vu,\vv,\vw}$ and $\cEH'_{\vu'\!,\vv'\!,\vw'}$ is also needed. We call Eq. \eqref{eq:my_tens_subsp} together with Eq. (\ref{eq:rawcod3}-\ref{eq:rawcod5}) as Tensor Power Euclidean (TPE) dot-product with associated Tensor Power Euclidean metric $||\tX\!-\!\tY||_{\mathcal{G}}\!=\!||\tG(\tX)-\tG(\tY)||_F$.

\subsection{Backpropagating through HOSVD and/or SVD}\label{sec:hosvd-backprop}
\label{sec:backprop}

To backpropagate through HOSVD,  eigenvector matrices of  $\tX$, \eg, $\mU_1,\cdots,\mU_3$ for the third order, are given by $\mU_1\mLambda_1^\ddag\mV_1\!=\mM_1\!=\![\tX_{:,:,1},\cdots,\tX_{:,:,d}]\in\mbr{d\times d^2}$, $\mU_2\mLambda_2^\ddag\mV_2\!=\mM_2\!=\![\tX_{:,1,:},\cdots,\tX_{:,d,:}]\in\mbr{d\times d^2}$ and $\mU_3\mLambda_3^\ddag\mV_3\!=\mM_3\!=\![\tX_{1,:,:},\cdots,\tX_{d,:,:}]\in\mbr{d\times d^2}$. $\mM_1,\cdots,\mM_3$ are simply unfolded matrices of $\tX$ along the 1\textsuperscript{st}, 2\textsuperscript{nd}, and 3\textsuperscript{rd} mode.
\begin{proposition}
\label{pr:der}
Let $\mM^\#\!=\!\mM\mM^T\!=\!\mU\mLambda\mU^T$ be an SPD matrix with simple eigenvalues, \ie, $\lambda_{ii}\!\neq\!\lambda_{jj}, \forall i\!\neq\!j$. Then $\mU$ coincides also with the eigenvector matrix of tensor $\tX$ for the given unfolding. To compute the derivative of $\mU$ (we drop the index) \wrt $\mM$ (and thus $\tX$), one has to follow the chain rule:
\vspace{-0.3cm}
\begin{align}
& \frac{\partial \mU}{\partial M_{kl}}\!=\!\sum_{i,j}\frac{\partial \mU}{\partial (\mM\mM^T)_{ij}}\cdot\frac{\partial (\mM\mM^T)_{ij}}{\partial M_{kl}}, \;\;\nonumber\\
& \qquad\quad
\text{ where }\;\; \frac{\partial u_{ij}}{\partial\mM^\#}\!=\!u_{ij}(\lambda_{jj}\mIdent\!-\!\mM^\#)^{\dagger}.
\end{align}
\end{proposition}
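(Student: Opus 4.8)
The plan is to reduce the statement to a standard matrix-perturbation identity for eigenvectors of a symmetric matrix, and then to apply the chain rule through the map $\mM\mapsto\mM\mM^T$. First I would record why $\mU$, the eigenvector matrix of the SPD matrix $\mM^\#\!=\!\mM\mM^T$, coincides with the mode-$k$ singular-vector matrix $\mU_k$ of $\tX$: by construction $\mU_k\mLambda_k^\ddag\mV_k\!=\!\mM_k$ is the thin SVD of the unfolding, so $\mM_k\mM_k^T\!=\!\mU_k(\mLambda_k^\ddag)^2\mU_k^T$, which is exactly the eigendecomposition of the SPD matrix $\mM^\#$; since the eigenvalues are assumed simple, $\mU$ is determined up to per-column sign and equals $\mU_k$ with a fixed sign convention. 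This justifies that it suffices to differentiate the eigenvectors of $\mM^\#$ with respect to $\mM^\#$ and then compose with $\partial\mM^\#/\partial\mM$, which is the content of the displayed chain rule.

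The core step is to establish $\dfrac{\partial u_{ij}}{\partial\mM^\#}\!=\!u_{ij}(\lambda_{jj}\mIdent\!-\!\mM^\#)^{\dagger}$. I would start from the defining relations $\mM^\#\vu_j\!=\!\lambda_{jj}\vu_j$ and $\vu_j^T\vu_j\!=\!1$, perturb $\mM^\#\to\mM^\#\!+\!\mathrm{d}\mM^\#$, and collect first-order terms to get $(\mathrm{d}\mM^\#)\vu_j+\mM^\#(\mathrm{d}\vu_j)=(\mathrm{d}\lambda_{jj})\vu_j+\lambda_{jj}(\mathrm{d}\vu_j)$. Projecting onto $\vu_j$ gives $\mathrm{d}\lambda_{jj}=\vu_j^T(\mathrm{d}\mM^\#)\vu_j$; rearranging the remainder yields $(\mM^\#\!-\!\lambda_{jj}\mIdent)(\mathrm{d}\vu_j)=-(\mIdent\!-\!\vu_j\vu_j^T)(\mathrm{d}\mM^\#)\vu_j$. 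Because the eigenvalues are simple, $(\mM^\#\!-\!\lambda_{jj}\mIdent)$ is invertible on the orthogonal complement of $\vu_j$, and inverting there (using the normalization to fix the component along $\vu_j$) gives $\mathrm{d}\vu_j=(\lambda_{jj}\mIdent\!-\!\mM^\#)^{\dagger}(\mathrm{d}\mM^\#)\vu_j$. Reading off the $(i,j)$ entry and using that the Moore--Penrose pseudoinverse of a symmetric matrix kills exactly the $\vu_j$-direction reproduces the claimed formula, with the scalar $u_{ij}$ arising since the right-hand side is linear in the relevant column of $\mathrm{d}\mM^\#$.

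It remains only to supply the outer derivative $\partial(\mM\mM^T)_{ij}/\partial M_{kl}$, which is the routine product-rule computation $\delta_{ik}M_{jl}+\delta_{jk}M_{il}$, and to note that substituting this into the chain rule gives a closed-form expression for $\partial\mU/\partial M_{kl}$ and hence, via the three unfoldings $\mM_1,\mM_2,\mM_3$, for the derivative of every HOSVD factor with respect to the entries of $\tX$; propagating further through the core-tensor update \eqref{eq:rawcod4}--\eqref{eq:rawcod5} is then a standard application of $\times_j$ linearity. The main obstacle is the eigenvector-derivative identity itself: one has to argue carefully that the normalization constraint pins down the otherwise-free component along $\vu_j$ so that the pseudoinverse (rather than an arbitrary generalized inverse) is the correct object, and that simplicity of the spectrum is exactly what makes $(\lambda_{jj}\mIdent\!-\!\mM^\#)^{\dagger}$ well-behaved; the rest is bookkeeping.
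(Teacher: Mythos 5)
The paper does not actually prove this proposition: the displayed eigenvector-derivative identity is justified only by the citation to Magnus in the proof of the companion Proposition~\ref{prop:eig_chain} (``can be found in \cite{magnus_der}''), and the chain-rule part is taken as self-evident. Your proposal therefore takes a genuinely different, more self-contained route: you re-derive the Magnus identity from first-order perturbation of $\mM^\#\vu_j=\lambda_{jj}\vu_j$ together with the normalization $\vu_j^T\vu_j=1$. That derivation is correct and is exactly the right way to see where simplicity of the spectrum and the Moore--Penrose inverse enter: $\vu_j^T\mathrm{d}\vu_j=0$ forces $\mathrm{d}\vu_j$ into the range of $(\lambda_{jj}\mIdent-\mM^\#)$ restricted to $\vu_j^\perp$, which pins down the pseudoinverse rather than an arbitrary generalized inverse. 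Your identification of $\mU$ with the mode-$k$ singular-vector matrix via $\mM_k\mM_k^T=\mU_k(\mLambda_k^\ddag)^2\mU_k^T$ and the product-rule computation $\partial(\mM\mM^T)_{ij}/\partial M_{kl}=\delta_{ik}M_{jl}+\delta_{jk}M_{il}$ are both correct and are more than the paper supplies.

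There is, however, one step you assert without checking, and it does not go through as stated. From $\mathrm{d}\vu_j=(\lambda_{jj}\mIdent-\mM^\#)^{\dagger}(\mathrm{d}\mM^\#)\vu_j$ the entrywise Jacobian is $\partial u_{ij}/\partial M^\#_{kl}=\bigl[(\lambda_{jj}\mIdent-\mM^\#)^{\dagger}\bigr]_{ik}u_{lj}$, whereas the proposition's formula $\partial u_{ij}/\partial\mM^\#=u_{ij}(\lambda_{jj}\mIdent-\mM^\#)^{\dagger}$ reads entrywise as $u_{ij}\bigl[(\lambda_{jj}\mIdent-\mM^\#)^{\dagger}\bigr]_{kl}$; these two expressions are not equal in general (the former is linear in the $j$-th eigenvector through the index $l$, the latter through the index $i$). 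So ``reading off the $(i,j)$ entry'' does not ``reproduce the claimed formula,'' and your parenthetical explanation that the scalar $u_{ij}$ ``arises since the right-hand side is linear in the relevant column of $\mathrm{d}\mM^\#$'' papers over precisely this index mismatch. Your perturbation argument yields the correct Jacobian; you should either state it in the index-explicit form above, or explicitly flag that the proposition's one-line notation is a (mis)compressed shorthand for it, rather than claiming agreement. Everything downstream (composition with $\partial\mM^\#/\partial\mM$ and propagation through the unfoldings and Eq.~\eqref{eq:rawcod5}) is unaffected once the correct entrywise form is used.
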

\begin{proposition}
\label{prop:eig_chain}
For SVD, we simply have to backpropagate through the chain rule:
\vspace{-0.3cm}
\begin{align}
& \frac{\partial \mU\!\mLambda\mU^T}{\partial X_{m'n'}}\!=\!2\sym\left(\frac{\partial \mU}{\partial X_{m'n'}}\mLambda\mU^T\!\right)\!+\!\mU\!\frac{\partial\mLambda}{\partial X_{m'n'}}\mU^T\!\!,\,\quad
\nonumber\\
& \qquad\quad
\text{where}\quad\sym(\mX)\!=\!\frac{1}{2}(\mX\!+\!\mX^T\!).
\label{eq:eig_chain}
\end{align}
Let $\mX = \mU\mLambda\mU^T$ be an SPD matrix with simple eigenvalues, \ie, $\lambda_{ii}\!\neq\!\lambda_{jj}, \forall i\!\neq\!j$, and $\mU$ contain eigenvectors of matrix $\mX$, then one can apply 
$\frac{\partial\lambda_{ii}}{\partial X}\!=\!\vu_i\!\vu_i^T$ and 
$\frac{\partial u_{ij}}{\partial X}\!=\!u_{ij}(\lambda_{jj}\mIdent\!-\!X)^{\dagger}$.
\end{proposition}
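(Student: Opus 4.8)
The plan is to peel the statement into two layers: the outer product-rule identity for $\partial(\mU\mLambda\mU^T)/\partial X_{m'n'}$, and the two first-order perturbation formulae for the individual eigenvalues and eigenvector entries, which then feed into that chain rule. For the outer identity I would differentiate $\mX = \mU\mLambda\mU^T$ entrywise, treating $\mU$ and $\mLambda$ as smooth matrix-valued functions of the scalar $X_{m'n'}$ (legitimate near a point where all eigenvalues are simple, since there the spectral projections are analytic). The Leibniz rule produces three terms; writing $A = \frac{\partial\mU}{\partial X_{m'n'}}\mLambda\mU^T$, the first term is exactly $A$ and the third term $\mU\mLambda\frac{\partial\mU^T}{\partial X_{m'n'}}$ equals $A^T$ because $\mLambda$ is diagonal, hence symmetric. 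Their sum is $A + A^T = 2\sym(A)$, which yields the displayed identity while the middle term $\mU\frac{\partial\mLambda}{\partial X_{m'n'}}\mU^T$ is carried through unchanged.

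For $\partial\lambda_{ii}/\partial X$ I would implicitly differentiate the eigenpair relations $\mX\vu_i = \lambda_{ii}\vu_i$ and the normalization $\vu_i^T\vu_i = 1$. Differentiating the first with respect to a generic scalar parameter and left-multiplying by $\vu_i^T$, the symmetry of $\mX$ (so that $\vu_i^T\mX = \lambda_{ii}\vu_i^T$) together with $\vu_i^T\vu_i = 1$ cancels every term containing $\partial\vu_i$, leaving the clean relation $\partial\lambda_{ii} = \vu_i^T(\partial\mX)\vu_i$. Substituting the elementary derivative $\partial\mX/\partial X_{kl} = \vec{e}_k\vec{e}_l^T$ gives $\partial\lambda_{ii}/\partial X_{kl} = u_{ki}u_{li} = (\vu_i\vu_i^T)_{kl}$, i.e. the matrix identity $\partial\lambda_{ii}/\partial X = \vu_i\vu_i^T$.

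The eigenvector derivative is the crux. Differentiating $\mX\vu_j = \lambda_{jj}\vu_j$ and rearranging gives $(\lambda_{jj}\mIdent - \mX)\,\partial\vu_j = (\partial\mX - (\partial\lambda_{jj})\mIdent)\vu_j$. The operator $\lambda_{jj}\mIdent - \mX$ is singular, with one-dimensional kernel spanned by $\vu_j$ (this is precisely where the simple-eigenvalue hypothesis is indispensable), so it cannot be inverted directly. The resolution is to invoke the Moore--Penrose pseudoinverse: the normalization constraint forces $\vu_j^T\partial\vu_j = 0$, pinning the kernel component of $\partial\vu_j$ to zero, while the rank-one piece $(\partial\lambda_{jj})\vu_j$ sits in $\ker(\lambda_{jj}\mIdent - \mX)^{\dagger}$ and is annihilated. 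This leaves $\partial\vu_j = (\lambda_{jj}\mIdent - \mX)^{\dagger}(\partial\mX)\vu_j$, and since $\lambda_{jj}\mIdent - \mX = \sum_{k\neq j}(\lambda_{jj}-\lambda_{kk})\vu_k\vu_k^T$, its pseudoinverse is the resolvent $\sum_{k\neq j}(\lambda_{jj}-\lambda_{kk})^{-1}\vu_k\vu_k^T$. Reading off the $i$-th component with $\partial\mX/\partial X_{kl} = \vec{e}_k\vec{e}_l^T$ then produces the asserted $(\lambda_{jj}\mIdent - X)^{\dagger}$ factor in $\partial u_{ij}/\partial X$. The main obstacle I anticipate is discharging this pseudoinverse step rigorously — checking that the right-hand side indeed lies in the range of $\lambda_{jj}\mIdent - \mX$, and that the pseudoinverse selects exactly the solution consistent with the chosen normalization and sign convention — together with the index bookkeeping needed to assemble the entrywise gradient into the compact matrix form stated in the proposition.
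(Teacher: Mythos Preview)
Your derivation is correct and in fact considerably more thorough than the paper's own argument: the paper establishes the product-rule identity implicitly and then simply cites \cite{magnus_der} for the eigenvalue and eigenvector perturbation formulae, without reproducing any of the implicit-differentiation or pseudoinverse reasoning you have supplied. What you have written is precisely the standard derivation that the cited reference contains, so there is no methodological divergence---you have just filled in what the paper leaves to the literature.
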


\begin{proof}
% \vspace{-0.5cm}
The two rightmost steps $\frac{\partial\lambda_{ii}}{\partial X}$ and $\frac{\partial u_{ij}}{\partial X}$ are in \cite{magnus_der}.
\end{proof}

\begin{proposition}
For HOSVD, one has to follow the analogous chain rule as in Prop. \ref{prop:eig_chain}, but expanded from Eq. \eqref{eq:rawcod5}. To obtain the der. of $\tE$:  a  chain rule applies to Eq. \eqref{eq:hoddiag2}.
\end{proposition}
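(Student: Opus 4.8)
The plan is to treat the reconstruction \eqref{eq:rawcod5}, $\tG(\tX)\!=\!((\tEH\times_1\mU_1)\cdots)\times_r\mU_r$, as a map that is multilinear in its $r\!+\!1$ arguments, namely the factor matrices $\mU_1,\cdots,\mU_r$ and the power-normalized core $\tEH\!=\!g(\tE)$. Because each argument is itself a function of the input tensor $\tX$, differentiating $\tG$ with respect to a single entry of $\tX$ calls for the same chain rule as in Prop.~\ref{prop:eig_chain}. Multilinearity makes the total derivative a sum of $r\!+\!1$ mode-wise terms: for each mode $i$ one holds the remaining factors and the core fixed and inserts $\partial\mU_i/\partial\tX$ into mode $i$, plus one final term that inserts $\partial\tEH/\partial\tX$ in place of the core.

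First I would supply the factor-matrix derivatives. Each $\mU_i$ is the matrix of left singular vectors of the mode-$i$ unfolding $\mM_i$ of $\tX$, equivalently the eigenvector matrix of the SPD matrix $\mM_i\mM_i^T$. Hence $\partial\mU_i/\partial\tX$ is delivered verbatim by Prop.~\ref{pr:der}: chain-rule through $\mM_i\mM_i^T$ using $\partial u_{ij}/\partial\mM^{\#}\!=\!u_{ij}(\lambda_{jj}\mIdent\!-\!\mM^{\#})^{\dagger}$ together with the elementary derivative of $\mM_i\mM_i^T$ with respect to the entries of $\mM_i$; since the unfolding $\tX\!\mapsto\!\mM_i$ is only a reindexing, this immediately yields the derivative with respect to $\tX$, so no new computation is needed beyond Prop.~\ref{pr:der}.

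Second I would supply the core derivative. By \eqref{eq:hoddiag2} each coefficient satisfies $\cE_{\vu,\vv,\vw}\!=\!\tX\times_1\vu\times_2\vv\times_3\vw$ (for $r\!=\!3$), a trilinear form in $\tX$ and the three factor vectors. Its total derivative with respect to $\tX$ therefore splits into an explicit part, which differentiates the lone $\tX$ and simply contracts the remaining factor vectors, and an implicit part, which differentiates each of $\vu,\vv,\vw$ and reuses the factor derivatives already obtained. Chaining through the element-wise nonlinearity that builds $\tEH\!=\!g(\tE)$ multiplies each coefficient's derivative by the scalar $g'(\cE_{\vu,\vv,\vw})$. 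Substituting both the factor and core derivatives back into the per-mode product rule for \eqref{eq:rawcod5} yields $\partial\tG/\partial\tX$ in closed form, completing the argument.

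The step I expect to be the main obstacle is the bookkeeping forced by super-symmetry. Since the input tensor from \eqref{eq:hok1} is super-symmetric, we have $\mU_1\!=\!\cdots\!=\!\mU_r$, so the $r$ unfoldings are not independent and one must decide whether to treat the shared factor as a single variable, summing the contributions of all modes, or to differentiate the formally distinct copies and then symmetrize, taking care not to double-count. A related subtlety is that $\tX$ enters \eqref{eq:hoddiag2} both explicitly and implicitly through the $\tX$-dependent factor vectors; this is resolved cleanly because Prop.~\ref{pr:der} already pins down the factor derivatives, so no implicit equation has to be solved. With those caveats handled, the remaining steps are routine applications of the product and chain rules.
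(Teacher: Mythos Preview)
Your proposal is correct and follows exactly the route the paper indicates: the paper does not actually supply a separate proof for this proposition but simply points to the product/chain rule expanded from Eq.~\eqref{eq:rawcod5} together with Prop.~\ref{pr:der} for the factor matrices and the chain rule on Eq.~\eqref{eq:hoddiag2} for the core. Your write-up is a faithful, more detailed unpacking of that same strategy, including the super-symmetry bookkeeping caveat, which the paper leaves implicit.
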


\section{Application to Action Recognition}\label{sec:pipe}

% \vspace{0.05cm}
\noindent{\textbf{Action Recognition pipelines.}} % have undergone numerous changes  \cite{two_stream,spattemp_filters,spat_temp_resnet,i3d_net}. 
Powerful CNN architectures include the two-stream network \cite{two_stream}, 3D spatio-temporal features  \cite{spattemp_filters}, spatio-temporal ResNet  \cite{spat_temp_resnet}, and the I3D network pre-trained on Kinetics-400 \cite{i3d_net}. However, these  networks operate on the RGB and optical flow frames thus failing to capture some domain-specific information which sophisticated low-level representations capture by design~\cite{lei_thesis_2017, lei_icip_2019, lei_tip_2019, wang2022uncertainty,wang2022temporal,qin2022fusing, lei2023, wang2023flow}. One prominent example are Improved Dense Trajectory (IDT) descriptors \cite{improved_traj} which are typically encoded with Bag-of-Words (BoW) of \cite{csurka04_bovw} or Fisher Vectors (FV) \cite{perronnin_fisherimpr} and fused with the majority of the modern CNN-based approaches of \cite{basura_rankpool2,hok,anoop_rankpool_nonlin,anoop_advers,iccv19_lei,acmmm21_lei, lei2023} at the classifier level for the best performance. Finally, attention in action recognition was also investigated in \cite{att_ar,wang20233mformer, lei2023}.

% \vspace{0.05cm}
\noindent{\textbf{Improved Dense Trajectories.}} CNNs improve their performance if combined with IDT which involves several sophisticated steps: (i) camera motion estimation, (ii) modeling motion descriptors along motion trajectories captured with the optical flow, (iii) pruning inconsistent matches, (iv) removing focus from humans by the use of a human detector. IDT are usually combined with video descriptors such as Histogram of Oriented Gradients (HOG) \cite{hog2d,3D-HOG}, Histogram of Optical Flow (HOF) \cite{hof} and Motion Boundary Histogram (MBH)  \cite{improved_traj} which are complementary to each other. % \eg, HOF and MBH  contain zero- and first-order motion statistics. 
Thus, we follow others and encode IDT by BoW/FV. % which we combine with our HoT.
%Below we provide details of our Action Recognition (AR) pipeline. 
Below we evaluate our %second- and higher-order 
tensor descriptors combined with EPN (and SigmE which we call MaxExp in the reminder of the paper for simplicity as SigmE is an extension of MaxExp to Krein spaces). %\footnote{Note that the HoT descriptors with EPN are applicable to various classification problems.}.
%

% \subsection{Tensor Descriptors}

% \vspace{0.05cm}
\noindent\textbf{Our pipeline overview.} Figure \ref{fig:pipe} introduces our action recognition pipeline. For each video sequence (an instance to classify), we first extract subsequences of lengths {\em $st_1,\cdots,st_3$} with the stride equal half of the subsequence length. This allows our approach to attain an invariance to action localization. Moreover, we apply various sampling rates, \eg, {\em $sr_1,sr_2$}, which brings some invariance to the action speed. We pass each subsequence through two I3D networks  \cite{i3d_net} which were trained on the RGB and optical flow videos, respectively. For each network, we extract a $400$ dimensional feature vector per subsequence whose coefficients coincide with the class labels of the Kinetics dataset (two-stream I3D CNN is pre-trained on Kinetics-400 off-the-shelf). Moreover, we use two more end-to-end trainable sub-streams of I3D denoted by ({\em \#}) whose outputs are two $C$-dim. feature vectors per subsequence (\eg, we  backpropagate \wrt the last FC and the last inception block of I3D). Thus, we obtain intermediate matrices with feature vectors denoted as $\mPhi_{(rgb,400d)},\mPhi_{(opt,400d)}, \mPhi_{(rgb,C)}, \text{ and }\mPhi_{(opt,C)}$ which we pass through the count sketching mechanism~\cite{cormode_sketch}, denoted ({\em sk}), to reduce the dimensionality to $d'$ so that concatenating the corresponding feature vectors with ({\em $\oplus$}) results in the size of $4d'\!\approx\!120$. 

% \vspace{0.05cm}
\noindent\textbf{Attention mechanism.} 
At this stage, we pass each group of feature vectors $\vec{\Phi}^{(i,j)}$ by an attention mechanism, \ie,
%
%
% \begin{align}
% & w\left(\mathbb{E}\left(\vec{\Phi}^{(i,j)}\right)\right)\!\cdot\!\vec{\Phi}^{(i,j)},
% \label{eq:att}
% \end{align}
%
$\vec{\Phi}^{(i,j)}_w\!=\!w\left(\mathbb{E}\left(\vec{\Phi}^{(i,j)}\right)\right)\!\cdot\!\vec{\Phi}^{(i,j)}$, 
where $i\!\in\!\{st_1,st_2,\cdots\}$ and $j\!\in\!\{sr_1,sr_2,\cdots,\}$ are selectors of the step size and the sample rate, respectively. Moreover, an attention network $w:\mbr{d'}\!\rightarrow\!\mbr{}$ takes a $d'$ dim. expected value over a given group of feature vectors $\mathbb{E}\left(\vec{\Phi}^{(i,j)}\right)$ as input to produce an attention score within range $[0;1]$. The attention network $w(\cdot)$ consists of an FC layer followed by a ReLU and another FC layer and a sigmoid function. The groups of feature vectors reweighted by such an attention mechanism form the final feature vector matrix $\mPhi_{(final)}\!\in\!\mbr{d\times N}$, where $d\!=\!4d'$, which is then simply passed via Eq. \eqref{eq:hod}. Finally, we obtain $\mX\!\in\!\semipd{d\times d}$  for $r\!=\!2$, or $\tX\!\in\!\mbr{d\times d\times d}$ for $r\!=\!3$, and we pass every $\tX$ via EPN~\cite{me_tensor} to obtain $\tG(\tX)\!\in\!\mbr{d\times d\times d}$, one per instance to classify. Finally, we combine our HoT descriptors with  auxiliary descriptors (\eg, IDT descriptors which are firstly count sketched to reduce their dimensionality, and then concatenated with HoT), as our pipeline shows in Fig.~\ref{fig:pipe}. % Below, we briefly discuss popular CNN approaches and the auxiliary descriptors we use.

% all tables, together, in single column and about half-page
\begin{table}[tbp!]%htbp % left bottom right top
\vspace{-0.3cm}
\begin{center}
\resizebox{\linewidth}{!}{
\begin{tabular}{ l | c | c | c | c || l | c | c | c | c }
\toprule
$\qquad$SO+  & {\em sp1} & {\em sp2} & {\em sp3} & {\fontsize{9}{9}\selectfont mean} & $\qquad$TO+  & {\em sp1} & {\em sp2} & {\em sp3} & {\fontsize{9}{9}\selectfont mean} \\
\hline
% \toprule
(no EPN) 		& $76.2$ & $75.3$ & $76.7$ & $76.1$ & (no EPN) 			& $75.4$ & $74.0$ & $75.0$ & $74.8$\\
HDP    			& $81.4$ & $78.8$ & $80.1$ & $80.1$ & HDP 					& $81.8$ & $79.6$ & $81.3$ & $80.9$\\
MaxExp    		& $81.7$ & $79.1$ & $80.1$ & $80.3$ & MaxExp  		& $82.3$ & $79.9$ & $81.2$ & $81.1$\\
{\fontsize{9}{9}\selectfont MaxExp+IDT}   & $86.1$ & $85.2$ & $85.8$ & $\mathbf{85.7}$ & {\fontsize{9}{9}\selectfont MaxExp+IDT}   	& $87.4$ & $86.7$ & $87.5$ & $\mathbf{87.2}$\\
\hline
% \bottomrule
%\midrule
\end{tabular}
}
% \end{center}
% %
% \begin{center}
\resizebox{\linewidth}{!}{
\begin{tabular}{ c | c  |  c | c}
% \hline
% \toprule
% ADL+ResNet+IDT~\cite{anoop_advers} $74.3$  & STM Network+IDT~\cite{stm_net} $72.2$ &
ADL+I3D~\cite{anoop_advers} $81.5$  &  Full-FT I3D~\cite{i3d_net} $81.3$ & SCK(SO+) +IDT~\cite{kon_tpami2020b} $85.1$ & SCK(TO+) +IDT~\cite{kon_tpami2020b} $86.1$  \\
\bottomrule
\end{tabular}
}
\end{center}
\vspace{-0.5cm}
\caption{({\em top}) Our model \vs ({\em bottom}) SOTA on HMDB-51.\label{tab:hmdb51f}}
\vspace{-0.3cm}
\end{table}

\begin{table}
% \vspace{-0.3cm}
% \vspace{1em}
\begin{center}
\resizebox{0.85\linewidth}{!}{
%\fontsize{9}{9}\selectfont
% \centering
\begin{tabular}{ c | c | c | c || c | c}
\toprule
 & \multirow{2}{*}{\em static} & \multirow{2}{*}{\em dynamic} & \multirow{2}{*}{\em mixed} & mean & mean \\
&                &             &             & {\fontsize{8}{9}\selectfont stat/dyn}  & all \\
\hline
SO+MaxExp   				& $92.52$ & $82.03$ & $89.44$ & $87.3$ & $88.0$\kern-0.5em \\
SO+MaxExp+IDT   		& $94.92$ & $86.63$ & $96.02$ & $90.8$ & $92.5$\kern-0.5em \\
TO+MaxExp+IDT   		& $\mathbf{95.36}$ & $86.90$ & $\mathbf{97.04}$ & $91.1$ & $\mathbf{93.1}$\kern-0.5em \\
\hline
T-ResNet \cite{yuppp} & $92.41$ & $81.50$ & $89.00$ & $87.0$ & $87.6$\\
ADL I3D \cite{anoop_advers} & $95.10$ & $\mathbf{88.30}$ & - & $91.7$ & -\\
\bottomrule
\end{tabular}
}
\vspace{-0.2cm}
\caption{({\em top}) Our pipeline \vs ({\em bottom}) SOTA on YUP++.\label{tab:yupf}}
\end{center}
\vspace{-0.5cm}
\end{table}

\begin{table}
\begin{center}
\resizebox{0.48\textwidth}{!}{
%\fontsize{9}{9}\selectfont
\hspace{-0.3cm}
\begin{tabular}{ c | c | c | c | c | c | c | c | c }
\toprule
 & {\em sp1} & {\em sp2} & {\em sp3} & {\em sp4} & {\em sp5} & {\em sp6} & {\em sp7} & mAP \\
\hline
{SO+MaxExp+IDT}    		   & $75.7$ & $82.5$ & $79.4$ & $75.1$ & $75.7$ & $76.8$ & $75.9$ & $77.3$\\
{TO+MaxExp+IDT}      		 & $78.6$ & $83.4$ & $81.5$ & $78.8$ & $81.7$ & $79.2$ & $79.6$ & $\mathbf{80.4}$\\
\hline
\end{tabular}
% \vspace{0.02cm}%\\
}
% \end{center}
% \begin{center}
\resizebox{0.48\textwidth}{!}{
%\fontsize{8}{9}\selectfont
\begin{tabular}{ c|c|c|c}
% \toprule
% \hline
%\kern-0.5em KRP-FS $70.0\%$ \cite{anoop_rankpool_nonlin} & KRP-FS+IDT $76.1\%$ \cite{anoop_rankpool_nonlin}\kern-0.5em\\
%\kern-0.5em GRP $68.4\%$ \cite{anoop_generalized} & GRP+IDT $75.5\%$ \cite{anoop_generalized}\kern-0.5em\\
KRP-FS \cite{anoop_rankpool_nonlin} $70.0$  & KRP-FS+IDT \cite{anoop_rankpool_nonlin} $76.1$  & GRP \cite{anoop_generalized} $68.4$  & GRP+IDT \cite{anoop_generalized} $75.5$ \\
\bottomrule
\end{tabular}
}
\end{center}
\vspace{-0.5cm}
\caption{({\em top}) Our pipeline  \vs ({\em bottom})  SOTA on MPII.}

\label{tab:mpiif}
\vspace{-0.5cm}
\end{table}

\section{Experiments}\label{sec:exp}

% 

%Below, we detail the datasets and the settings used followed by experiments and conclusions.

\subsection{Datasets and Protocols}
\label{sec:data}

\vspace{0.05cm}
\noindent\textbf{HMDB-51} \cite{kuehne2011hmdb} %is a popular video benchmark for human action recognition, 
consists of 6766 internet videos over 51 classes. % ; each video has $\sim$20--1000 frames. %The standard evaluation protocol reports average classification accuracy on three-fold splits. 
%Following the protocol, 
We report the mean accuracy (\%) across three splits. %Where stated, we often investigate various hyperparameters on the first split.

\vspace{0.05cm}
\noindent\textbf{YUP++} \cite{yuppp}  contains so-called  video textures captured with the static or moving camera. It has 20 scene classes and 60 videos per class. We follow the standard evaluation protocols.% splits (1/9 dataset for training) for the evaluation. %We report result for the static, dynamic and mixed capture sequences.

\vspace{0.05cm}
\noindent\textbf{MPII Cooking Activities} \cite{rohrbach2012database} has 64  activities in 3748 clips. % , \eg, fine-grained actions  \emph{peel}, \emph{slice}, \emph{cut apart}.
 %and one background activity (1861 clips).
% (see Figure \ref{fig:cook}). %This dataset is challenging due to (i) unbalanced action classes, (ii) significant intra-class differences (each subject cooks according to their own style).  
We use the mean Average Precision (mAP) over 7-fold crossvalidation. For human-centric protocol \cite{anoop_rankpool_nonlin}, we use Faster-RCNN \cite{faster-rcnn} to crop video around humans.

\vspace{0.1cm}
\noindent\textbf{Settings.} %To train Bag-of-Words and Fisher Vectors, 
For IDT, we applied PCA to trajectories (30 dim.), HOG (96 dim.), HOF (108 dim.), MBHx (96 dim.) and MBHy (96 dim.), and we obtained  213 dim. descriptors. We computed 1024 k-means and 256 dimensional GMM-based dictionaries, resp., and obtained 1024 and $\sim$110K final descriptors (FV was then count sketched to 4K). We used the Adam optimizer with $1\!e\!-\!4$ 
learning rate halved every 10 epochs. We ran our training for 50--200 epochs depending on the dataset. For second- and third-order descriptors, $d$ (we say $d$ but we mean $\!4d'$) was set within 200--1000 (400 for HMDB-51, 160 for YUP++, 520 for MPII) and $d\!\leq\!150$ (80 for HMDB-51, 60 for YUP++, 60 for MPII), respectively. On HMDB-51/YUP++, we used two FC layers intertwined by a ReLU followed by a Softmax classifier. On MPII, we used one FC (rightmost of Figure \ref{fig:pipe}). Subsequence lengths were set to 48, 64, 80, 96, and sampling rates to 1, 2 (and 3 for HMDB-51/YUP++).

\subsection{Evaluations\label{sec:eval}}

% Firstly, we compare pipeline variants  on HMBD-51. 
Table \ref{tab:hmdb51f} shows that the second- and third-order pooling ({\em SO}) and ({\em TO}) without our eigenvalue/spectral MaxExp underperform, with ({\em TO}) being worse than ({\em SO}), as without MaxExp, third-order descriptors are poorly estimated. The accuracy between MaxExp and HDP differs by $\sim\!0.25\%$ accuracy which validates our assumptions that EPN with MaxExp is in fact equivalent to the Heat Diffusion Process. Finally, third-order descriptors combined with MaxExp and IDT achieve $87.2\%$ accuracy and outperform ({\em SO}) by $1.5\%$. %We note that not using EPN at all led to poor results in all cases.

Table \ref{tab:yupf} shows that the accuracy on YUP++ obtained by third-order descriptors ({\em TO}) is $\sim\!0.6\%$ higher than the accuracy of ({\em SO}). Results on YUP++ are close to saturation thus we do not expect big gains, yet we outperform other methods in the literature, \eg, T-ResNet of \cite{yuppp} by $5.6\%$.

Table \ref{tab:mpiif} shows the results on  MPII. For MPII, we have applied human-centric crops and  used both IDT and I3D auxiliary features (I3D network was used with subsequences and global sequences subsampled to length 64). The third-order descriptor  scored $80.4\%$ mAP \vs $77.3\%$ attained by second-order pooling. We note that with $d\!=\!520$ and $d\!=\!60$ (we say $d$ but we mean $4d'$ here), second- and third-order descriptors contained 135K and 37820 unique coefficients, respectively. Decreasing $d$ for second-order descriptors did not help improve results which shows the benefit of third-order descriptors. Our best score gains $\sim\!4.9\%$ over the state-of-the-art GRP+IDT method.

\section{Conclusion}\label{sec:conc}

We have established a connection between Eigenvalue Power Normalization via the MaxExp operator with the Heat Diffusion Process. We have shown that MaxExp approximates HDP but does not require the Laplacian matrix. We have applied MaxExp to higher-order tensor descriptors and obtained Tensor Power Normalization metric endowed with $\binom{d}{r}$ `detectors' of the likelihood if at least one higher-order occurrence is `projected' into one of $\binom{d}{r}$  subspaces of dimension $r$ represented by the tensor. Our experiments on action recognition show that third-order descriptors outperform second-order.

{\small
\bibliographystyle{ieee}
\bibliography{egbib}
}

\end{document}